\documentclass{IEEEtran}

\usepackage{ifpdf}
\usepackage{cite}

\usepackage{amsmath}
\usepackage{algorithm}
\usepackage{algpseudocode}
\usepackage{array}
\ifCLASSOPTIONcompsoc
  \usepackage[caption=false,font=normalsize,labelfont=sf,textfont=sf]{subfig}
\else
  \usepackage[caption=false,font=footnotesize]{subfig}
\fi

\usepackage{fixltx2e}
\usepackage{stfloats}


\usepackage{url}

\usepackage{microtype}
\usepackage{graphicx}
\usepackage{booktabs} 
\usepackage{hyperref}       
\usepackage{tikz}
\usepackage{url}            
\usepackage{amsfonts}       
\usepackage{nicefrac}       
\usepackage{amsmath}
\usepackage{amsthm}
\usepackage{multirow}
\usepackage{colortbl}
\usepackage{color}
\usepackage{array}

\newtheorem{thm}{Theorem}

\newtheorem{prop}{Proposition}
\newtheorem{cor}{Corollary}

\newtheorem{defn}{Definition}

\begin{document}
\title{rTop-$k$: A Statistical Estimation Approach to Distributed SGD}

\author{Leighton~Pate~Barnes,
        Huseyin~A.~Inan,
        Berivan~Isik,
        and~Ayfer~\"{O}zg\"{u}r
        \thanks{All authors contributed equally to this work.}
        \thanks{This work was supported in part by NSF award CCF-1704624 and by a Google faculty research award.}
        \thanks{L. P. Barnes, H. A. Inan, B. Isik and A. \"{O}zg\"{u}r are with Department of Electrical Engineering, Stanford University, Stanford, CA 94305, USA (e-mail: lpb@stanford.edu; hinan1@stanford.edu; berivan.isik@stanford.edu; aozgur@stanford.edu).  }
}

\maketitle

\IEEEpeerreviewmaketitle

\begin{abstract}
The large communication cost for exchanging gradients between different nodes significantly limits the scalability of distributed training for large-scale learning models. Motivated by this observation, there has been significant recent interest in techniques that reduce the communication cost of distributed Stochastic Gradient Descent (SGD), with gradient sparsification techniques such as top-$k$ and random-$k$ shown to be particularly effective. The same observation has also motivated a separate line of work in distributed statistical estimation theory focusing on the impact of communication constraints on the estimation efficiency of different statistical models. The primary goal of this paper is to connect these two research lines and demonstrate how statistical estimation models and their analysis can lead to new insights in the design of communication-efficient training techniques. We propose a simple statistical estimation model for the stochastic gradients which captures the sparsity and skewness of their distribution. The statistically optimal communication scheme arising from the analysis of this model leads to a new sparsification technique for SGD, which concatenates random-$k$ and top-$k$, considered separately in the prior literature. We show through extensive experiments on both image and language domains with CIFAR-10, ImageNet, and Penn Treebank datasets that the concatenated application of these two sparsification methods consistently and significantly outperforms either method applied alone.
\end{abstract}


\section{Introduction}
Stochastic Gradient Descent (SGD) and its variants have become the workhorse for training large machine learning models on ever growing datasets. Such large-scale training can be accelerated by partitioning datasets across multiple nodes and parallelizing the computation of the stochastic gradient; nodes can compute gradients in parallel based on their local datasets, and these local gradients can then be aggregated at a master node. However, for large models with millions of parameters, it is now well-understood that full-precision gradient aggregation is costly and the associated communication overhead can negate the savings in computation time  \cite{deepgrad}. 
Communication is an even more significant bottleneck when training is performed over potentially slow, unreliable and expensive wireless links such as in federated learning \cite{fedreview}, where locally processing user data offers additional benefits such as privacy and personalization.  

Motivated by these observations, there has been significant recent interest in developing communication-efficient SGD methods \cite{Li, Seide, Strom, QSGD, federated0, federated1, federated2, Aji, TernGrad, signSGD, deepgrad,Wang, Wangni,vqSGD}, as well as other fundamental contributions to distributed first-order \cite{Bottou10large-scalemachine,NIPS2011_218a0aef,1104412,Qgadmm,MATCHA} and second-order \cite{Shamir,jahani2020efficient,ba2016distributed,crane2019dingo,jahani2019scaling} optimization procedures. These works show that the communication cost of SGD can be significantly reduced by using a variety of techniques, including thresholding and sparsification of the local gradients (e.g. communicating only the top-$k$ gradients with largest magnitudes or $k$ randomly selected gradients); quantization and compression of gradients (potentially with randomization) to a small number of bits;  and reducing the number of communication rounds by performing multiple iterations on the local dataset of each node (e.g. federated averaging). The general methodology for many of these works is to first propose a  technique for reducing the communication cost of SGD, and then demonstrate its effectiveness through experimental evaluations and/or prove its convergence under standard assumptions. Such convergence guarantees have been proven in \cite{QSGD, Qsparse-local-SGD, Stichetal, vqSGD} for various communication efficient training techniques.

In this paper, we take a different approach. We ask the following question: can we develop suitable statistical models for the stochastic gradients and use these models to inform the design of more efficient training techniques? This statistical perspective has been the focus of a recent research line in distributed estimation theory \cite{duchi,garg,braverman,raginski, yanjun,diakonikolas, archayaetal, allerton, Leightonarxiv}. Motivated by similar observations as above, these works focus on characterizing the impact of communication constraints on the estimation efficiency of common statistical models, e.g. Gaussian mean estimation and its sparse variants, and discrete distribution estimation. However, it is unclear how to leverage these results to inform new practical training methods with SGD; these works focus on a  parameter estimation framework rather the training problem, and their emphasis is on canonical statistical models such as Gaussian mean estimation, which may not accurately reflect the distribution of  the stochastic gradients.  

In this work, we connect these two research lines by casting each communication round of distributed SGD as a communication-constrained parameter estimation problem and propose a 
statistical model for the stochastic gradients. This model aims to capture the skewed and sparse distribution of the local gradients observed in experiments, which underlies the experimental success of existing sparsification methods such as top-$k$ and thresholding \cite{Aji,deepgrad}. These methods communicate only a few local gradients per node which have the largest magnitudes, and can reduce the communication cost to order $k\log d$, where $d$ is the number of parameters. 
 
By using the toolset of distributed estimation theory, we characterize the fundamental estimation efficiency for our proposed statistical model and the optimal communication and estimation schemes that achieve this performance. The study of this statistical model naturally leads to the idea of communicating a randomly chosen subset of a set of large magnitude gradients.  Instead of simply choosing the top-$k$ gradients, the optimal communication scheme chooses a random $k$-subset of the gradients with large magnitudes. Even though random-$k$ and top-$k$ sparsification have both been separately considered  in the literature, selecting a random subset of the top magnitude gradients, which corresponds to their concatenated application, is novel. We call this new sparsification strategy rTop-$k$. It can be observed from \cite{Qsparse-local-SGD, Stichetal, Stichnew} that  rTop-$k$ readily enjoys the same convergence guarantees as top-$k$ and random-$k$ since it is also a compression operator (Definition~3 of \cite{Qsparse-local-SGD}). However, in extensive experiments on both image and language domains with CIFAR-10, ImageNet, and Penn Treebank datasets we observe that rTop-$k$ significantly outperforms either top-$k$ or random-$k$ applied separately.

The contributions of our paper can be summarized as follows:
\begin{itemize}
\item We show that each communication round of distributed SGD can be cast as a communication-constrained statistical parameter estimation problem and that the study of such problems can inform the design of communication-efficient training schemes. To the best of our knowledge, our work is the first to connect distributed stastical estimation theory with communication efficient SGD methods.
\item We propose and analyze novel statistical estimation models that capture the skew and sparse distribution of the stochastic gradients. We develop new communication and estimation schemes for these models and prove their optimality via information theoretic lower bounds.
\item We introduce a new gradient sparsification scheme, rTop-$k$,  and show that it outperforms either of its two previously known constituent schemes in extensive experiments.
\end{itemize}

\section{Approach and Main Results} \label{Formulation}
\subsection{Distributed Statistical Parameter Estimation} We begin by formulating the distributed parameter estimation problem. Let
\begin{align*}
X_1, X_2, \cdots, X_n \overset{\text{i.i.d.}}{\sim} P_\theta, 
\end{align*}
where $\theta\in\Theta\subseteq \mathbb{R}^d$. We are interested in estimating the parameter $\theta$ from the samples $X_1,\dots, X_n$. Unlike the traditional statistical setting where samples $X_1,\cdots,X_n$ are available to the estimator as they are, we consider a distributed setting where each observation $X_i$ is available at a different node in a network and has to be communicated to a master node by using $k$ bits. In other words, each node has to encode its sample $X_i $ by a possibly randomized strategy $\Pi_i$ to a $k$-bit string $M_i$ independently of the other nodes and send it to the master processor. The goal of the master node is to produce an estimate  $\hat{\theta}$ of the underlying parameter $\theta$ from the $nk$-bit transcript $M=(M_1,\ldots,M_n)$ it receives from the nodes so as to minimize the  worst case squared $\ell^2$ risk:
\begin{equation}\label{minmax}
\inf_{(\{\Pi_i\}_{i=1}^n, \hat{\theta})}\sup_{\theta\in\Theta} \mathbb{E}_{\theta}\|\hat{\theta}(M)-\theta\|_2^2,
\end{equation}
where $\hat{\theta}(M)$ is an estimator of $\theta$ based on the transcript $M$. Note that the encoding strategies $\Pi_i$ for $i=1,\dots,n$ and the estimator $\hat{\theta}$ can be jointly designed to minimize the risk.

\subsection{Gradient Aggregation in Distributed Training} We next formulate the gradient aggregation problem in distributed training with SGD as a distributed parameter estimation problem. Let $g^t=\mathbb{E}[\nabla \ell (\omega_t;X)]$ denote the gradient of the population risk in $\mathbb{R}^d$ where $\ell$ denotes the loss function, $\omega_t\in\mathbb{R}^d$ denotes the weights of the network at the current iteration $t$, and the expectation is with respect to the true distribution of the samples $X$. Assume there are $n$ distributed nodes and each node $i$ for $i=1,\dots,n$ computes the (stochastic) gradient of the loss function with respect to  a small batch of i.i.d. samples $\{X_{j}^{(i)}\}_{j=1}^{B}$ available at this node,
$$
g_i^t=\frac{1}{B}\sum_{j=1}^B \nabla \ell (\omega_t;X_{j}^{(i)}).
$$
Note that when disjoint subsets of the dataset are assigned to different nodes, $\{X_{j}^{(i)}\}_{j=1}^{B}$ can be modeled as independent and identically distributed for different $i$. Hence, the local gradients $g_1^t, g_2^t,\dots, g_n^t\,\in\,\mathbb{R}^d$ are generated independently from the same (unknown) distribution  and have mean equal to $g^t$, the true gradient with respect to the population risk. \footnote{The same model can be extended to the case where nodes perform multiple steps of stochastic gradient descent on their local dataset such as in federated learning \cite{fedreview}, in which case $g_i^t$ is the resultant model update generated by node $i$.} To model the communication bottleneck, assume each node has $k$ bits to communicate its local gradient vector $g_i^t$ to the master node. The goal of the master node is to generate an estimate $\hat{g}^t$ of the true gradient $g^t$ under squared $\ell^2$ risk as per \eqref{minmax} from the  $nk$-bit transcript $M$ it obtains at the end of this communication round. The $g^t$ can be regarded as the parameter of the underlying distribution that we wish to estimate. In order to complete the model description,  we need to specify the statistical model $P_\theta$, with $\theta=g^t$, according to which the samples $g_1^t, g_2^t,\dots, g_n^t$ are generated and the set $\Theta\subseteq\mathbb{R}^d$ in which the parameter lies.

\begin{figure}
\centering
\begin{tikzpicture}
\node at (2, 1.4) {$g^t=\mathbb{E}[\nabla \ell (\omega;X)]$}; 
\draw [->] (1.8, 1.2) -- (0.2, 0.2); 
\draw [->] (1.9, 1.2) -- (1.1, 0.2); 
\draw [->] (2.1, 1.2) -- (2.9, 0.2); 
\draw [->] (2.2, 1.2) -- (3.8, 0.2); 
\draw (0,0) circle (0.2cm); \node [below] at (0,-0.2) {$\tilde{g}_1^t$}; 
\draw (1,0) circle (0.2cm); \node [below] at (1,-0.2) {$\tilde{g}_2^t$}; 
\node at (2,0) {$\cdots$}; 
\draw (3,0) circle (0.2cm); \node [below] at (3,-0.2) {$\tilde{g}_{n-1}^{t}$}; 
\draw (4,0) circle (0.2cm); \node [below] at (4,-0.2) {$\tilde{g}_{n}^{t}$}; 
\draw [->] (0, -0.7) -- (0, -1.4); \draw [->] (1, -0.7) -- (1, -1.4); 
\draw [->] (3, -0.7) -- (3, -1.4); \draw [->] (4, -0.7) -- (4, -1.4);
\draw (-0.3,-2.5) rectangle (4.3,-1.5);  
\node at (2,-2) {centralized processor}; 
\draw [->] (4.3,-2) -- (5, -2); \node [right] at (5,-2) {$\hat{g}^t$}; 
\node [left] at (0, -1) {$k$ bits}; \node [right] at (4,-1) {$k$ bits}; 
\end{tikzpicture} 
\caption{Statistical estimation model for distributed training.}
\label{fig:fig1}
\end{figure}
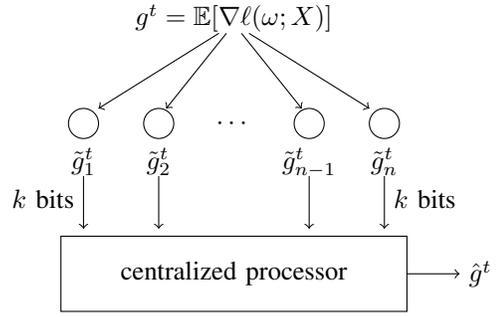 

\subsection{Statistical Models}

The distributed parameter estimation problem in \eqref{minmax} has been studied for specific classes of statistical models $P_\theta$ in the literature. One canonical model that has been of significant interest in the literature is the Gaussian location model $P_\theta=\mathcal{N}(\theta,\sigma^2I_d)$ with $\Theta=\mathbb{R}^d$ which is studied in \cite{duchi,garg,braverman,yanjun}. Adopting this Gaussian model for the gradient aggregation problem would correspond to modeling the local gradients $g_i^t$ computed at each node as i.i.d. observations of the true gradient vector $g_t$ under spherically-symmetric additive Gaussian noise. However, this leads to dense observation vectors $g_i^t$, while the distribution of the local gradients is often skewed and sparse like in experiments, i.e. often relatively few entries of $g_i^t$ have large magnitudes and many entries have magnitudes close to zero. 

A sparse variation of the Gaussian mean estimation problem has been studied in \cite{garg,braverman, yanjun}, where $\Theta= \{\theta\in \mathbb{R}^d:\|\theta\|_0\le s\le d\}$. Adopting this statistical model for the gradient aggregation problem would correspond to modeling the true gradient vector $g_t$ as sparse, with only $s$ non-zero components, while the local gradients $g_i^t$ computed at each node would still correspond to i.i.d. observations of the true gradient vector $g_t$ under additive Gaussian noise. Even though the true gradient vector $g_t$ is assumed to be sparse, qualitatively this model behaves similarly to the original Gaussian model. In particular the number of bits $k$ needed to achieve the centralized estimation performance (with no communication constraints) scales linearly with the ambient dimension $d$ as in the original Gaussian model, i.e. it is not impacted by the sparsity $s$. Similarly the optimal communication scheme independently quantizes each gradient component as in the case of the dense Gaussian model.

In this work, we instead propose to focus on the following sparse Bernoulli model. Suppose each node $i$ has a sample \begin{equation}\label{eq:model} X_i \sim \prod_{j=1}^d\text{Bern}(\theta_j)\end{equation} with $$\Theta = \left\{ \theta\in[0,1]^d \, : \, \sum_{j=1}^d \theta_j \leq s \right\} \; .$$ This model imposes a soft sparsity constraint on the parameter vector $\theta$; when $s\ll d$ not all entries of $\theta$ can be large, i.e. close to `1'. The observation vectors $X_i$ are highly skewed with entries either equal to `1' or `0'. Note also that the parameter vector $\theta$ corresponds to the mean of the observation vectors $X_i$. We adopt this model for the gradient aggregation problem with the understanding that $\theta=|g_t|$ and the `0' values from the Bernoulli random variables model entries of $g_i^t$ with small magnitudes, while the `1' values model entries with large magnitudes. Note that a large value for a given component of $\theta=|g_t|$ makes it more likely to observe `1', i.e. a large magnitude entry in the corresponding location of the stochastic gradient vectors $g_i^t$ for $i=1,\dots,n$. However, it does not exclude the possibility of observing a `0' i.e. a small magnitude entry, in the same location.

While this is a highly simplified model for real stochastic gradient vectors, it allows us to focus our attention on the impact of a highly skewed distribution for the magnitudes of the gradients. This model and our main results in the next section can be easily extended to more accurately reflect the distribution of the stochastic gradients, and doing so does not change the optimal encoding or estimation schemes. In particular, we could consider the following refinements:
\begin{itemize}
\item[(i)]{\bf Signed values:} The mean parameters $\theta_j$ could be in $[-1,1]$ subject to $\sum_{j=1}^d |\theta_j| \leq s$, so that  $\theta_j$'s can be positive or negative, with the $j$th component of the $X_i$ now being $\mathsf{Sign}(\theta_j)\mathsf{Bern}(|\theta_j|)$.
\item[(ii)]{\bf Scaling:} The Bernoulli random variables could be scaled by any $M>0$ with the goal of estimating the scaled Bernoulli parameters $M\theta$.
\item[(iii)]{\bf Continuous Perturbations:} We could add a vector of continuous independent zero-mean random variables $Z_i$ to each $X_i$,  where $X_i$ is distributed according to \eqref{eq:model} and each component of $Z_i$ is supported in $[-\frac{1}{2},\frac{1}{2}]$. As a result, each component of the resultant $Y_i=Z_i+X_i$ will now take continuous values.
\end{itemize}
With none of these refinements changing the fundamental encoding and estimation schemes, we find that the simple sparse Bernoulli model captures the issues that are central to our distributed estimation problem. Note that by including just the sign and scaling changes, the resulting model also makes a good representation of coarsely quantized stochastic gradients such as with ternary quantization.

\subsection{Theoretical Results and Discussion}


We now turn to analyzing the distributed estimation problem under this model.

\begin{thm}[sparse Bernoulli upper bound]\label{thm:sparseBernoulliUB}
For the sparse Bernoulli model \eqref{eq:model} above,
$$\inf_{(\hat\theta,\Pi_i)}\sup_{\theta\in\Theta} \mathbb{E}_\theta\|\hat\theta-\theta\|_2^2 \leq C\frac{s^2\log d}{nk}$$
for $2\log d \leq k \leq s\log d$ and some constant $C$ that is independent of $n,k,d,s$.
\end{thm}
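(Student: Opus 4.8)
\emph{Proof idea.} The plan is to exhibit an explicit encoding scheme --- a version of rTop-$k$ tailored to the sparse Bernoulli model --- together with a coordinatewise estimator that is \emph{exactly unbiased}, and then bound its risk by its variance. I would set $m=\Theta(k/\log d)$; the hypothesis $k\ge 2\log d$ ensures (up to the choice of constants) that each node can afford to transmit $N_i$ together with $m\ge 1$ coordinate indices, while $k\le s\log d$ forces $m\le s$. Concretely, node $i$ observes the support $S_i=\{j:X_{ij}=1\}$ of its sample, with $N_i:=|S_i|$, and transmits: (a) the integer $N_i\in\{0,\dots,d\}$ in $\lceil\log_2(d+1)\rceil$ bits; and (b) a uniformly random subset $R_i\subseteq S_i$ of size $\min(m,N_i)$ --- all of $S_i$ when $N_i\le m$, and a uniform $m$-subset of $S_i$ otherwise --- using at most $m\lceil\log_2(d+1)\rceil$ bits (transmit $m$ slots, each an index in $[d]$ or a null symbol). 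Since every nonzero entry of $X_i$ has the same magnitude, "the top-magnitude coordinates of $X_i$" is exactly $S_i$, so (b) is precisely the random-$k$-of-top-$k$ prescription; part (a) is the one extra ingredient. Only private randomness is used, since $R_i$ is transmitted explicitly.

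At the master I would set $c_i=\max(1,N_i/m)$ and $\hat\theta_j=\frac1n\sum_{i=1}^n W_{ij}$ with $W_{ij}:=c_i\,\mathbf{1}[j\in R_i]$. Conditioning on $X_i$, a coordinate $j\in S_i$ lands in $R_i$ with probability $\min(m,N_i)/N_i$, so using the identity $\max(m,N_i)\min(m,N_i)=mN_i$ one gets $\mathbb{E}[W_{ij}\mid X_i]=\mathbf{1}[j\in S_i]=X_{ij}$. Hence $\mathbb{E}_\theta[\hat\theta_j]=\theta_j$ for every $j$, and therefore $\mathbb{E}_\theta\|\hat\theta-\theta\|_2^2=\sum_j\mathrm{Var}(\hat\theta_j)\le\frac1n\sum_j\mathbb{E}[W_{1j}^2]$. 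This importance-weighting step is the crux: $N_i$ is exactly the observable quantity the master needs in order to undo the bias introduced by random subsampling, and it is cheap to send.

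For the risk bound I would use $\sum_j W_{ij}^2=c_i^2|R_i|=c_i^2\min(m,N_i)\le N_i+N_i^2/m$, so $\sum_j\mathbb{E}[W_{1j}^2]\le\mathbb{E}[N_1]+\mathbb{E}[N_1^2]/m$. Since $N_1=\sum_jX_{1j}$ is a sum of independent Bernoullis with mean $\|\theta\|_1\le s$, we have $\mathbb{E}[N_1]\le s$ and $\mathbb{E}[N_1^2]=\mathrm{Var}(N_1)+\|\theta\|_1^2\le s+s^2$; as the hypotheses force $s\ge 2$ and $m\le s$, this gives $\mathbb{E}_\theta\|\hat\theta-\theta\|_2^2\le\frac1n\bigl(s+2s^2/m\bigr)\le\frac{3s^2}{nm}\le C\,\frac{s^2\log d}{nk}$, uniformly over $\theta\in\Theta$, which is the claim.

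The main obstacle is conceptual rather than computational, and it surfaces if one tries the naive rTop-$k$ estimator "report the fraction of nodes that reported $j$" with no rescaling: random subsampling of the support biases coordinate $j$ by order $\theta_j\,s/m$, and $\sum_j\theta_j^2 s^2/m^2$ can exceed the target risk by a factor of order $n$, so a correction is unavoidable. The correct factor $1/\min(1,m/N_i)=\max(1,N_i/m)$ depends on $\theta$ only through the observable count $N_i$, which is why spending an additional $O(\log d)$ bits per node is enough to make the estimator exactly unbiased. What remains is routine: the bit-budget bookkeeping needed to realize $m=\Theta(k/\log d)$ (tightest when $k$ is near $2\log d$, where one can still afford $N_i$ together with at least one coordinate index), and the elementary moment computations above.
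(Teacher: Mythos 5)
Your proposal is correct and is essentially the paper's own argument: transmit the count $\|X_i\|_1$ plus a uniformly random subset of $\Theta(k/\log d)$ support indices, rescale by $\max(1,N_i/m)$ to get an exactly unbiased estimator, and bound the variance via $N_i + N_i^2/m$ together with the Poisson binomial second moment and the constraints $2\log d\le k\le s\log d$. Your $c_i$ is exactly the paper's $1/S_i$ and your risk computation matches the paper's step by step, so there is nothing to reconcile.
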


We prove this theorem in Section~\ref{sec:SparseBUB} by describing an explicit independent encoding scheme at each node and a centralized estimator that together achieve at most this error. The scheme is built on the idea of subsampling the non-zero entries in each sample vector and communicating the locations of the subsampled components. With signed values as in (i) above, the scheme is modified to communicate also the sign of each subsampled component. Since this requires only a single bit for each subsampled component it does change the scalings in Theorem~\ref{thm:sparseBernoulliUB}. Similarly, the constant scaling in (ii) also does not impact the scheme and the result of Theorem~\ref{thm:sparseBernoulliUB}. When continuous perturbations are present as in (iii), the scheme is modified to include a pre-processing step where $Y_i$ is quantized to `0' if $|Y_i|\leq 1/2$, and quantized to `1' if $|Y_i|>1/2$. This converts the observations to the original Bernoulli model, on which we apply the idea of random subsampling. This recovers the result in Theorem~\ref{thm:sparseBernoulliUB}. Note that this strategy is different than simply communicating the components of $Y_i$ with largest magnitude or the indexes of these large magnitude components.

We also prove the following lower bound which shows that our proposed scheme is order optimal up to logarithmic factors.

\begin{thm}[Sparse Bernoulli lower bound] \label{thm:sparseBernoulliLB}
For the sparse Bernoulli model \eqref{eq:model} above, if $nk \geq d\log\frac{d}{s}$ and $s\leq\frac{d}{2}$ then
\begin{equation}
\sup_{\theta\in\Theta} \mathbb{E}_\theta\|\hat\theta-\theta\|_2^2 \geq c\max\left\{\frac{s^2\log \frac{d}{s}}{nk},\frac{s}{n}\right\}\label{eq:sparseBerLB}
\end{equation}
for any estimator $\hat{\theta}(M)$, communication strategies $\Pi_i$, and some constant $c$ that is independent of $n,k,d,s$.
\end{thm}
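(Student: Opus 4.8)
The plan is to establish the two terms of the maximum in \eqref{eq:sparseBerLB} separately, starting with the easier term $s/n$, which requires no communication argument at all. I would restrict attention to the sub-family of $\Theta$ in which $s$ fixed coordinates carry means in $[\tfrac14,\tfrac34]$ and the other $d-s$ coordinates are identically zero; this lies in $\Theta$ since its $\ell^1$-mass is at most $3s/4\le s$. Because imposing communication constraints can only increase the minimax risk, the left side of \eqref{eq:sparseBerLB} is at least the \emph{unconstrained} minimax risk over this sub-family, which is of order $s/n$ by the classical argument for Bernoulli mean estimation — per coordinate, the two-point alternatives $\theta=\tfrac12\pm c_0/\sqrt n$ are statistically indistinguishable while $(\theta^+-\theta^-)^2$ has order $1/n$ — tensorized over the $s$ free coordinates. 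Hence the left side is at least $c_1 s/n$.

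For the main term I would use the ``dense'' sub-family $\theta_j=\frac{s}{2d}(1+\delta v_j)$ indexed by $v\in\{-1,+1\}^d$, with $\delta\in(0,1]$ fixed at the end; since $s\le d/2$ every entry lies in $[0,\tfrac12]$ and $\sum_j\theta_j\le s$, so these are admissible. Assouad's lemma then gives
\[
\inf_{(\hat\theta,\Pi_i)}\ \frac{1}{2^d}\sum_v\mathbb{E}\|\hat\theta-\theta_v\|_2^2 \ \ge\ c_2\Big(\frac{s\delta}{d}\Big)^2\sum_{j=1}^d\big(1-\mathrm{TV}(P^{+j}_M,P^{-j}_M)\big),
\]
where $P^{\pm j}_M$ is the law of the transcript $M$ when $v_j$ is frozen to $\pm1$ and $v_{-j}$ averaged out. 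Bounding each total variation by $C_0\sqrt{I(v_j;M)}$ (via Pinsker applied to the two halves of $I(v_j;M)$) and then applying Cauchy--Schwarz, the right side is at least $c_2(\tfrac{s\delta}{d})^2\big(d-C_0\sqrt{d\sum_{j=1}^d I(v_j;M)}\big)$. Everything now reduces to an upper bound on $\sum_{j=1}^d I(v_j;M)$.

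This information bound is the crux and the step I expect to be the main obstacle. By the chain rule $\sum_j I(v_j;M)=\sum_{i=1}^n\sum_j I(v_j;M_i\mid M_{1:i-1})$, and since conditioning on $M_{1:i-1}$ changes neither the law of $X_i$ given $v$ nor the fact that $M_i$ carries $k$ bits, it suffices to bound $\sum_j I(v_j;M_i)$ for a single $k$-bit encoding of a single sample, uniformly over such conditioning, and multiply by $n$. The point is that the naive bounds $\sum_j I(v_j;M_i)\le H(M_i)\le k$ and $\sum_j I(v_j;M_i)\le\sum_j I(v_j;X_i)\le C_3 s\delta^2$ are both too weak: the first ignores the structure of $X_i$, and pushing the Gaussian-style analogy (a ``$k$ times per-coordinate $\chi^2$'' bound) would yield the \emph{false}, too-strong conclusion of order $sd/(nk)$, which contradicts Theorem~\ref{thm:sparseBernoulliUB}. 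What is true, and what must be proven by exploiting that $p:=\tfrac{s}{2d}\ll1$, is a strong-data-processing / constrained-Fisher-information estimate of the shape $\sum_j I(v_j;M_i)\le C_4\,\frac{k\,\delta^2}{\log(d/s)}$: a typical sample has only of order $s$ nonzero coordinates, so a $k$-bit message can localize only of order $k/\log(d/s)$ of them, and each one carries only of order $\delta^2$ of information about its coordinate. I would obtain this in the spirit of the Fisher-information lower-bound machinery of \cite{braverman,Leightonarxiv}, controlling the trace of the information of $M_i$ by $k$ times the small per-coordinate Bernoulli variance $p(1-p)$ (of order $s/d$) divided by the per-coordinate resolution cost $\log(d/s)$.

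Granting that lemma, $\sum_j I(v_j;M)\le C_4\,nk\,\delta^2/\log(d/s)$, and the Assouad bound becomes at least $c_2(\tfrac{s\delta}{d})^2\big(d-C_0\sqrt{C_4\,dnk\,\delta^2/\log(d/s)}\big)$. Choosing $\delta^2=\frac{d\log(d/s)}{4C_0^2C_4\,nk}$ makes the bracket at least $d/2$, and this $\delta$ obeys $\delta\le1$ \emph{precisely because} the hypothesis $nk\ge d\log\frac ds$ forces $\delta^2\le\frac{1}{4C_0^2C_4}$, which is at most $1$ after absorbing the lemma's constant. The bound is then at least $c_5\,s^2\delta^2/d$, i.e.\ of order $\frac{s^2\log(d/s)}{nk}$. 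Combining this with the first paragraph yields the maximum in \eqref{eq:sparseBerLB}. The routine parts are the two-point/Assouad bookkeeping and the choice of $\delta$; the genuine work is the sparse-source information inequality with the $\log(d/s)$ denominator, and it is the gap between $\log(d/s)$ and $\log d$ there that accounts for the ``up to logarithmic factors'' slack relative to Theorem~\ref{thm:sparseBernoulliUB}.
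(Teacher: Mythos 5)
Your overall architecture is sound, and your reduction correctly isolates where the difficulty lies — but that is exactly the problem: the step you flag as ``the crux'' is asserted, not proven, and it is the entire content of the theorem. Everything else in your write-up (the $s/n$ term via the unconstrained subfamily, Assouad over $\theta_j=\frac{s}{2d}(1+\delta v_j)$, Pinsker plus Cauchy--Schwarz, the choice of $\delta$, the use of $nk\ge d\log\frac ds$ to keep $\delta\le 1$) is routine bookkeeping around the claimed inequality $\sum_j I(v_j;M_i)\le C\,k\delta^2/\log(d/s)$, and you offer only a counting heuristic (``a $k$-bit message can localize only $k/\log(d/s)$ of the $\sim s$ nonzero coordinates'') in its support. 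That heuristic does not translate into a proof: nothing in it rules out an encoder that spends its $k$ bits on statistics other than locations of ones, and the naive information bounds you correctly dismiss ($\le k$ and $\le C s\delta^2$) show that the inequality genuinely requires a quantitative strong-data-processing-type argument. The paper closes precisely this gap by importing Theorem~\ref{thm:lower_bound} (Barnes--Han--\"Ozg\"ur \cite{isit,Leightonarxiv}), which bounds the trace of the Fisher information carried by a $k$-bit message by $C k N^2$ whenever every projection $\langle u,S_\theta(X)\rangle$ of the score has $\Psi_2$-Orlicz norm at most $N$, and then by the explicit computation that for $\mathrm{Bern}(\theta_i)$ with $\theta_i\asymp s/d$ one has $N\asymp \frac{d}{s\sqrt{\log(d/s)}}$ rather than the naive $\sqrt{d/s}$-type second-moment bound. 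That Orlicz-norm calculation is exactly where the $\log(d/s)$ saving comes from; your sketch contains no substitute for it, so as written the proposal does not establish the first term of \eqref{eq:sparseBerLB}.

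Two smaller points. First, once you grant the key single-sample bound, your chain-rule step ``conditioning on $M_{1:i-1}$ changes nothing, so multiply by $n$'' is not automatic in the mutual-information formulation: conditioning on earlier messages tilts the prior on $v$, and controlling $I(v_j;M_i\mid M_{1:i-1})$ uniformly requires a direct-sum argument of its own (this tensorization issue is one reason the paper works with Fisher information and a van Trees--type bound over the cube $\left[\frac{s}{2d},\frac{s}{d}\right]^d$, where additivity over independent samples is immediate, rather than with Assouad plus mutual information). Second, your treatment of the $s/n$ term is fine and in fact more detailed than the paper's one-line appeal to the centralized rate. So the shell of your argument is a legitimate alternative to the paper's van Trees route, but the theorem is only as strong as the sparse-source information inequality at its core, and that inequality — the analogue of the $\Psi_2$-based quantized Fisher information bound — is the piece you would still have to prove.
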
 
The proof is given in Section~\ref{sec:pf2}. It builds on a geometric characterization of Fisher information from quantized samples, a framework introduced in \cite{allerton}. The same lower bound applies trivially to extensions (i) and (ii) above, while the extension to (iii) follows simply from the data processing inequality for Fisher information. The two theorems together show that the number of bits $k$ (per node) needed to achieve the centralized performance under the sparse Bernoulli model is of the order of $s\log d$. (The centralized performance is given by the second term in the maximization on the right side of \eqref{eq:sparseBerLB}.) This model is interesting from a statistical estimation perspective; in contrast to the sparse Gaussian location model, this model suggests that the centralized performance can be achieved with much fewer than $d$ bits when the underlying vector is sparse with $s\ll d$.  

\section{$\text{r}$Top-$k$ Algorithm}
\label{sec:rtopk}
The main algorithmic idea that emerges from the theoretical analysis of the sparse Bernoulli model is to communicate a random subset of the large magnitude entries of the local gradient vectors. Motivated by this observation, we propose a new sparsification operator rTop-k for sparsifying the local gradient vectors, so as to reduce the overall communication cost of distributed SGD. This new sparsification operator can be viewed as a concatenation of top-$r$ and random-$k$ sparsification operators from the prior literature.

\begin{defn}[top-$r$ sparsification \cite{Aji,deepgrad,Stichetal,Qsparse-local-SGD}]
For a parameter $1\leq r\leq d$, the operator $\mathsf{top}_r: \mathbb{R}^d\rightarrow\mathbb{R}^d$ is defined for $\omega \in\mathbb{R}^d$ as
$$
\left(\mathsf{top}_r(\omega)\right)_i :=\begin{cases} (\omega)_i, & \; \text{if} \; i\in\{\pi(1),\ldots,\pi(r)\} \\ 0 & \; \text{otherwise ,} \end{cases}
$$
where $\pi$ is a permutation of $[d]:=\{1,\dots,d\}$ such that $|(\omega)_{\pi(i)}|\geq|(\omega)_{\pi(i+1)}|$ for $i=1,\dots, d-1$.
\end{defn}

\begin{defn}[random-$k$ sparsification \cite{federated2, Stichetal,Qsparse-local-SGD}]
For a parameter $1\leq k\leq d$, let $\mathcal{W}_k$ denote the set of all $k$ element subsets of $[d]$ and $W$ be uniformly at random chosen element of $\mathcal{W}_k$. The operator $\mathsf{random}_k: \mathbb{R}^d\rightarrow\mathbb{R}^d$ is defined for $\omega \in\mathbb{R}^d$ as
$$
\left(\mathsf{random}_k(\omega)\right)_i :=\begin{cases} (\omega)_{i}, & \; \text{if} \; i\in W \\ 0 & \; \text{otherwise.} \end{cases}
$$
\end{defn}

\begin{defn}[rTop-$k$ sparsification]\label{def:rTopk}
 Let $\pi$ be a permutation of $[d]$ such that $|(\omega)_{\pi(i)}|\geq|(\omega)_{\pi(i+1)}|$ for $i=1,\dots, d-1$. For two parameters $1\leq k\leq r\leq d$, let $\mathcal{U}_k$ denote the set of all $k$ element subsets of $\{\pi(1),\ldots,\pi(r)\}$ and $U$ be uniformly at random chosen element of $\mathcal{U}_k$.  The operator $\mathsf{rTop}_k: \mathbb{R}^d\rightarrow\mathbb{R}^d$ is defined for $\omega \in\mathbb{R}^d$ as
$$
\mathsf{rTop}_k (\omega)\,:=\begin{cases} (\omega)_{i}, & \; \text{if} \; i\in U \\ 0 & \; \text{otherwise.} \end{cases}.
$$
\end{defn}

%

The rTop-$k$ sparsification operator first chooses the $r$ entries with largest magnitudes of a vector $\omega\in\mathbb{R}^d$ and then communicates $k$ randomly chosen gradients among these $r$ entries. Note that while both top-$r$ and random-$k$ strategies have been studied extensively in the prior literature, their concatenated application as in rTop-$k$ has not been investigated in prior works. As we demonstrate in the next section, rTop-$k$ consistently outperforms either top-$r$ or random-$k$ applied alone in experiments. Intuitively, top-$r$ sparsification allows to focus only on the most significant entries of the gradient vector and random-$k$ sparsification allows to reduce the bias introduced by top-$r$ sparsification, hence concatenation combines the best of the two approaches. 

Note that using this algorithm, $k$ is the final number of components that must be communicated from each node to the centralized processor. Since the index for each component can be referred to with $\log d$ bits, and the value of each component can be encoded with a constant number of bits of precision, this is also up to logarithmic factors the number of bits needed for communication. For this reason we have used $k$ to refer to both the number of bits allowed in the distributed statistical estimation framework, and the number of gradient components that are communicated in distributed training.


The convergence of distributed SGD with sparsification operators top-$r$ and random-$k$ has been analyzed in the prior literature. Together with some form of error compensation, these methods have been shown to converge as fast as vanilla SGD in \cite{Qsparse-local-SGD, Stichetal, Stichnew}. Therefore, when applying rTop-$k$ sparsification to distributed SGD we also employ error compensation.  The algorithm we use for our simulations in the subsequent section is given in Algorithm \ref{algorithm_rtopk_memory}. In the next subsection, we prove that Algorithm \ref{algorithm_rtopk_memory} converges at the same rate as vanilla SGD for smooth but non-convex functions by building on the results of  \cite{Qsparse-local-SGD}. We note that this convergence result differs somewhat from the standard proofs of SGD convergence in that the gradient estimates can be unbiased, as is the case for rTop-$k$, however this does not affect the convergence of the algorithm when error compensation is employed.

\begin{algorithm}[!h]
{\bf Hyperparameters:} number of components communicated $k$, subsampling ratio $r/k$, learning rate $\eta$, minibatch size $B$ \\
{\bf Inputs:} local datasets $\mathcal{D}_i\quad i=1,\dots,n$ \\
{\bf Output:} weights $\omega^T$ \\
\vspace{-.2in}
\begin{algorithmic}
\State{Broadcast randomly initialized weights, $\omega^0$, to all nodes.}
\State{$m_i^0 \gets 0$}
\For{$t=0,\ldots,T-1$} \\
\hspace{.15in} \underline{\textbf{On Nodes:}}
\For{$i=1,\ldots,n$} 
\State{receive $\omega^{t}$ from the centralized processor.}
\State{$g_i^t \gets \frac{1}{B} \sum_{j=1}^B\nabla \ell (\omega_t;X_j^{(i)})$; $\{X_j^{(i)}\}_{j=1}^B$ is uniformly chosen from  $\mathcal{D}_i$}
\State{$g_i^t \gets g_i^t+ m_i^t$}
\State{$\hat g_i^t \gets  \mathsf{rTop}_k\,(g_i^t)$}
\State{$m_i^{t+1} \gets$ $g_i^t-\hat g_i^t$}
\State{send $\hat g_i^t$ to the centralized processor.} \EndFor \\
\hspace{.15in} \underline{\textbf{On Centralized Processor:}}
\State{receive $\hat g_i^t$'s from $n$ nodes.}
\State{$\hat g^t \gets \frac{1}{n}\sum_{i=1}^n \hat g_i^t$}  
\State {$\omega^{t+1} \gets \omega^{t} - \eta \cdot \hat g^t$} 
\State{broadcast $\omega^{t+1}$ to all nodes.}
\EndFor \\
\State $\mathsf{rTop}_k\,(g_i^t):$
\State \hspace{.15in} $I \gets$ (indices of the $r$ largest mag. components of $g_i^t)$
\State \hspace{.15in} $I \gets$ (a random subset of $k$ out of the $r$ elements of $I$)
\State \hspace{.15in} $\hat g_i^t \gets$ ($g_i^t$ with indices  not in $I$ set to 0)
\State \hspace{.15in} {\bf return} $\hat g_i^t$ 
\end{algorithmic}
\caption{SGD with rTop-$k$ and Error Compensation}
\label{algorithm_rtopk_memory}
\end{algorithm}

\subsection{Convergence of rTop-$k$}
In this section we demonstrate that a result from \cite{Qsparse-local-SGD} implies the convergence of Algorithm \ref{algorithm_rtopk_memory} to a fixed point under certain assumptions on the empirical risk. For nodes $i=1,\ldots,n$, let $\mathcal{D}_i$ be the local dataset at node $i$. Further define
$$f^{(i)}(\omega) = \mathbb{E}_{l\sim \mathcal{D}_i}[\ell(\omega;X_l)]$$
to be the local loss function at node $i$ where $\mathbb{E}_{l\sim\mathcal{D}_i}$ denotes expectation over a random sample $l$ chosen from $\mathcal{D}_i$. We consider minimization of the empirical risk
$$f(\omega) = \frac{1}{n}\sum_{i=1}^n f^{(i)}(\omega)$$
using Algorithm \ref{algorithm_rtopk_memory}.

We make the following assumptions:
\begin{itemize}
\item[(i)] {\bf Smoothness:} The local loss functions $f^{(i)}:\mathbb{R}^d\to\mathbb{R}$ are $L$-smooth for each $i=1,\ldots,n$ in the sense that
$$f^{(i)}(\omega) \leq f^{(i)}(\omega') + \langle \nabla f^{(i)}(\omega'),\omega-\omega'\rangle + \frac{L}{2}\|\omega-\omega'\|_2^2$$
for any $\omega,\omega'\in\mathbb{R}^d$.
\item[(ii)] {\bf Bounded second moment of gradients:} For each $i=1,\ldots,n$ and $\omega\in\mathbb{R}^d$, and some constant $0 \leq G < \infty$, we have $$\mathbb{E}_{l\sim\mathcal{D}_i}\|\nabla\ell(\omega;X_l)\|_2^2 \leq G^2 \; .$$
This also implies a bound on the variance $\sigma_i^2 = \mathbb{E}_{l\sim\mathcal{D}_i}\|\nabla\ell(\omega;X_l)-\nabla f^{(i)}(\omega)\|_2^2 \leq G^2 \; .$
\end{itemize}


Below, we show that the $\mathsf{rTop}_k$ operator in Definition~\ref{def:rTopk} is a ``compression operator,'' and that this implies certain convergence guarantees for the empirical-risk minimization. We note that the convergence of distributed SGD with compression operators has been established in the literature for  strongly convex \cite{Stichetal, Qsparse-local-SGD} and smooth but non-convex \cite{Qsparse-local-SGD} loss functions under the bounded second moment assumption for the gradients (Assumption (ii) above). Assumption (ii) is a standard assumption in \cite{SSS07,NJLS09,RRWN11,AHJ+18,Stichetal, Qsparse-local-SGD}. However, we note that the strongly convex assumption contradicts with Assumption (ii)  if the domain of the optimization problem is assumed to be all of $\mathbb{R}^d$ as is done in \cite{Stichetal}. Assumption (ii) has been subsequently removed in follow-up work \cite{Stichnew}. In this paper, we focus on the non-convex case where Assumption (ii) is standard and does not lead to a contradiction.


\begin{defn}[Compression operator]
A (potentially randomized) function $\mathsf{Comp}_k:\mathbb{R}^d\to\mathbb{R}^d$ is a compression operator if there exists a constant $\gamma\in(0,1]$ (potentially depending on $k$ and $d$) such that
$$\mathbb{E}_C\|\omega-\mathsf{Comp}_k(\omega)\|_2^2 \leq (1-\gamma)\|\omega\|_2^2$$
for each $\omega\in\mathbb{R}^d$, where the expectation is over the randomness in the operator.
\end{defn}

\begin{prop}
$\mathsf{rTop}_k$ is a compression operator with $\gamma=k/d$.
\end{prop}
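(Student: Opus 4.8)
The plan is to unwind the definition of $\mathsf{rTop}_k$, compute the expected squared error explicitly as a function of the partition of coordinates into ``large'' and ``small'' magnitude entries, and then use the defining property of the top-$r$ selection to close the bound. Fix $\omega\in\mathbb{R}^d$ and let $S=\{\pi(1),\dots,\pi(r)\}$ be the set of indices of the $r$ largest-magnitude components, with $S^c=[d]\setminus S$. By definition $\mathsf{rTop}_k(\omega)$ keeps exactly the coordinates in the random $k$-subset $U\subseteq S$, so the error vector $\omega-\mathsf{rTop}_k(\omega)$ is supported on $[d]\setminus U = S^c \cup (S\setminus U)$, and
$$\|\omega-\mathsf{rTop}_k(\omega)\|_2^2 \;=\; \sum_{i\in S^c}\omega_i^2 \;+\; \sum_{i\in S\setminus U}\omega_i^2 .$$

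Next I would take the expectation over the uniformly random $k$-subset $U$ of the $r$-element set $S$. For each fixed $i\in S$ we have $\Pr[i\in U]=k/r$, hence $\Pr[i\in S\setminus U]=1-k/r$, and by linearity of expectation
$$\mathbb{E}_U\|\omega-\mathsf{rTop}_k(\omega)\|_2^2 \;=\; \sum_{i\in S^c}\omega_i^2 \;+\; \Bigl(1-\tfrac{k}{r}\Bigr)\sum_{i\in S}\omega_i^2 .$$
Since $\|\omega\|_2^2 = \sum_{i\in S^c}\omega_i^2 + \sum_{i\in S}\omega_i^2$, the claimed inequality $\mathbb{E}_U\|\omega-\mathsf{rTop}_k(\omega)\|_2^2 \le (1-k/d)\|\omega\|_2^2$ is, after collecting terms and dividing by $k/d>0$, equivalent to
$$\sum_{i\in S^c}\omega_i^2 \;\le\; \frac{d-r}{r}\sum_{i\in S}\omega_i^2 .$$

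The remaining step, which is really the only substantive point, is to verify this last inequality using that $S$ indexes the $r$ \emph{largest} magnitudes. Every coordinate in $S^c$ has squared magnitude at most $\min_{j\in S}\omega_j^2$, which in turn is at most the average $\frac{1}{r}\sum_{j\in S}\omega_j^2$; therefore
$$\frac{1}{d-r}\sum_{i\in S^c}\omega_i^2 \;\le\; \frac{1}{r}\sum_{j\in S}\omega_j^2 ,$$
and multiplying through by $d-r$ gives exactly what is needed. (The degenerate case $r=d$ makes $S^c$ empty and the inequality trivial, consistently with $\mathsf{rTop}_k$ reducing to $\mathsf{random}_k$ there.) This establishes that $\gamma=k/d\in(0,1]$ works, so $\mathsf{rTop}_k$ is a compression operator. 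I do not anticipate any real obstacle; the proof is a short computation, and the main thing to get right is the probability $\Pr[i\in U]=k/r$ for $i\in S$ and the elementary averaging comparison between the small and large coordinate blocks.
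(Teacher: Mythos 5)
Your proof is correct and follows essentially the same route as the paper: compute the expectation over the random $k$-subset to get $(1-k/r)\sum_{i\in S}\omega_i^2+\sum_{i\in S^c}\omega_i^2$, then close the bound with the averaging fact that the top-$r$ squared magnitudes dominate, which the paper states as ``top-$r$ average $\geq$ overall average'' and you state in the equivalent form ``bottom-$(d-r)$ average $\leq$ top-$r$ average.'' The only difference is the cosmetic rearrangement of the final inequality, so nothing further is needed.
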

\begin{proof}
Let $\omega_1,\ldots,\omega_d$ be an ordering of the $d$ components of $\omega$ such that $$|\omega_1|\geq|\omega_2|\geq\ldots\geq|\omega_d| \; .$$
Then
\begin{align*}
\mathbb{E}_C\|\omega-\mathsf{rTop}_k(\omega)\|_2^2 & = \mathbb{E}_C\left[\sum_{j=1}^d(\omega_j-(\mathsf{rTop}_k(\omega))_j)^2\right] \\
& = \left(1-\frac{k}{r}\right) \sum_{j=1}^r \omega_j^2 + \sum_{j=r+1}^d \omega_j^2 \\
& \leq \sum_{j=1}^d \omega_j^2 - \frac{k}{r}\frac{r}{d}\sum_{j=1}^d \omega_j^2 \\
& = \left(1-\frac{k}{d}\right)\|\omega\|_2^2 \; .
\end{align*}
\end{proof}
Using this proposition, we get the below convergence result that follows as a special case from Theorem 1 in \cite{Qsparse-local-SGD}. Let $f^* = \min_{\omega\in\mathbb{R}^d} f(\omega)$.
\begin{thm} \label{thm:convergence}
Under assumptions (i) and (ii) above, let the learning rate be set to $\eta = \frac{\widehat{C}}{\sqrt{T}}$ where $\widehat{C}$ is a constant that satisfies $\frac{\widehat{C}}{\sqrt{T}}\leq\frac{1}{2L}$. Let the $\omega^t$ be generated according to Algorithm \ref{algorithm_rtopk_memory}. Then
\begin{align*}\mathbb{E}\|\nabla f(z_T)\|_2^2  \leq & \left( \frac{\mathbb{E}[f(\omega^0)] - f^*}{\widehat{C}} + \widehat{C}L\left(\frac{G^2}{Bn}\right)\right)\frac{4}{\sqrt{T}} \\
&+ 8 \left(4\frac{\left(1-\frac{k^2}{d^2}\right)}{\frac{k^2}{d^2}}+1\right)\frac{\widehat{C}^2L^2G^2}{T}
\end{align*}
where $z_T$ is a random variable that is sampled uniformly from the $\omega^t$ each with probability $1/T$.
\end{thm}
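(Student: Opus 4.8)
The plan is to obtain Theorem~\ref{thm:convergence} as a direct corollary of the non-convex convergence guarantee (Theorem~1) of \cite{Qsparse-local-SGD}, using the Proposition above to supply the one missing ingredient, namely that $\mathsf{rTop}_k$ satisfies the compression-operator hypothesis required by that result. Concretely, I would first observe that Algorithm~\ref{algorithm_rtopk_memory} is exactly the compressed-SGD-with-error-feedback scheme analyzed in \cite{Qsparse-local-SGD}, specialized to a single local gradient step per communication round (no local SGD / federated averaging, i.e.\ $H=1$): each node adds its accumulated error $m_i^t$ to the fresh stochastic gradient $g_i^t$, applies the compression operator $\mathsf{rTop}_k$, transmits $\hat g_i^t$, and updates the memory by the compression residual $g_i^t-\hat g_i^t$; the server averages the $\hat g_i^t$ and takes a plain gradient step. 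One then checks that our Assumptions (i) and (ii) are (or imply) the hypotheses used in \cite{Qsparse-local-SGD}: (i) is $L$-smoothness of each $f^{(i)}$, and (ii) bounds the second moment of the stochastic gradients, which in particular bounds the per-node variance $\sigma_i^2\le G^2$ as already noted in the text; the objective $f=\frac1n\sum_i f^{(i)}$ is the same empirical risk in both.

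Second, I would invoke the Proposition, which gives that $\mathsf{rTop}_k$ is a compression operator with contraction parameter $\gamma=k/d$. Plugging this $\gamma$, the minibatch size $B$ (so the effective per-node gradient variance is $\le G^2/B$), the step size $\eta=\widehat{C}/\sqrt T$ with $\widehat{C}$ chosen so that $\eta\le\frac{1}{2L}$ (matching the step-size restriction of \cite{Qsparse-local-SGD}), and $H=1$ into Theorem~1 of \cite{Qsparse-local-SGD} yields a bound on $\frac1T\sum_{t}\mathbb{E}\|\nabla f(\omega^t)\|_2^2$, which equals $\mathbb{E}\|\nabla f(z_T)\|_2^2$ for the uniformly-random iterate $z_T$. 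The leading $1/\sqrt T$ term collects the optimization-gap and variance contributions $\big(\frac{\mathbb{E}[f(\omega^0)]-f^*}{\widehat{C}}+\widehat{C} L\frac{G^2}{Bn}\big)$, and the $1/T$ term collects the error-feedback penalty, which scales like a $\frac{1-\gamma}{\gamma}$-type (or $\frac{1-\gamma}{\gamma^2}$-type) factor times $\widehat{C}^2 L^2 G^2$. The last step is to massage these constants into the displayed form, using $k\le d$ to replace the exact $\gamma=k/d$ dependence by the (weaker but cleaner) factor $4\frac{1-k^2/d^2}{k^2/d^2}+1$ and folding absolute constants into the $4$ and $8$.

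The main obstacle is bookkeeping rather than a new idea: one has to line up the notation and the precise constant structure of Theorem~1 in \cite{Qsparse-local-SGD} with the statement here --- in particular confirming that the $H=1$ specialization kills the local-drift terms, that the variance enters as $G^2/(Bn)$ in the $1/\sqrt T$ term and as $G^2$ in the $1/T$ term, and that the compression penalty indeed enters through a factor dominated by $4\frac{1-k^2/d^2}{k^2/d^2}+1$, so that the stated bound is a valid (slightly loosened) upper bound. A secondary point worth making explicit, as the text flags, is that unlike textbook SGD analyses the transmitted message $\hat g_i^t$ is a biased (and for rTop-$k$ possibly, conditionally, unbiased) compression of $g_i^t$ rather than an unbiased estimate of $\nabla f(\omega^t)$; the error-compensation memory $m_i^t$ is precisely what absorbs this bias over iterations, and this is already built into the analysis of \cite{Qsparse-local-SGD}, so no separate argument is needed here.
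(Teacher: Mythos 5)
Your proposal is correct and follows essentially the same route as the paper: the paper likewise obtains Theorem~\ref{thm:convergence} as a direct specialization of Theorem~1 of \cite{Qsparse-local-SGD} with $H=1$, $\gamma=k/d$ supplied by the Proposition, and $\sum_i\sigma_i^2\leq nG^2$, noting additionally that since updates are aggregated every step the iterates coincide across nodes so $z_T$ ranges over only $T$ values. The only difference is cosmetic: the paper treats the displayed constants as coming verbatim from the cited theorem rather than as a loosened bound needing further massaging.
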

\noindent Theorem \ref{thm:convergence} is exactly the same as Theorem 1 in \cite{Qsparse-local-SGD}, except with $H=1$, $\gamma = k/d$, and using $\sum_{i=1}^n \sigma_i^2 \leq nG^2$. Additionally, since we aggregate the parameter updates on every time step, the iterates $\omega^t$ are the same at every node and the random variable $z_T$ only varies over $T$ possible values instead of $nT$ possible values. Corollary 2 in \cite{Qsparse-local-SGD} clarifies the order of these terms, and we reproduce the corresponding result below for convenience.
\begin{cor}
Suppose $\mathbb{E}[f(\omega^0)] - f^* \leq J^2$ and pick $\widehat{C}^2 = \frac{Bn(\mathbb{E}[f(\omega^0)] - f^*)}{G^2L}$ which will satisfy $\frac{\widehat{C}}{\sqrt{T}}\leq\frac{1}{2L}$ if $T$ is sufficiently large. Then
$$\mathbb{E}\|\nabla f(z_T)\|_2^2 \leq O\left(\frac{JG}{\sqrt{BnT}}\right) + O\left(\frac{J^2Bnd^2}{k^2T}\right) \; .$$
\end{cor}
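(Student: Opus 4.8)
The plan is simply to substitute the prescribed choice $\widehat{C}^2 = Bn\Delta/(G^2L)$, where I write $\Delta := \mathbb{E}[f(\omega^0)] - f^* \le J^2$, into the two-term bound of Theorem~\ref{thm:convergence} and simplify, while keeping track of which quantities are held constant inside the $O(\cdot)$ notation (the smoothness constant $L$ and all absolute constants). So with $\widehat{C} = \sqrt{Bn\Delta/(G^2L)}$, I would treat the $\frac{4}{\sqrt{T}}$ part and the $\frac{1}{T}$ part separately.

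For the $\frac{4}{\sqrt{T}}$ part: the first summand is $\Delta/\widehat{C} = G\sqrt{L\Delta/(Bn)}$ and the second summand is $\widehat{C}LG^2/(Bn) = G\sqrt{L\Delta/(Bn)}$ as well — the chosen $\widehat{C}$ is precisely the one that balances these two — so this contribution equals $8\,G\sqrt{L\Delta}/\sqrt{BnT}$, which after using $\Delta\le J^2$ and absorbing $\sqrt{L}$ into the constant is $O\!\left(JG/\sqrt{BnT}\right)$. For the $\frac{1}{T}$ part: one computes $\widehat{C}^2 L^2 G^2 = Bn\Delta L$, and since $1\le k\le d$ we have $\frac{1-k^2/d^2}{k^2/d^2} = \frac{d^2}{k^2}-1 \le \frac{d^2}{k^2}$, so the bracketed factor $4\frac{1-k^2/d^2}{k^2/d^2}+1$ is $O(d^2/k^2)$; hence this contribution is $O\!\left(Bn\Delta L\,d^2/(k^2T)\right) = O\!\left(J^2 B n d^2/(k^2T)\right)$. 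Summing the two pieces yields the claimed inequality.

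It then remains only to note that the hypothesis ``$T$ sufficiently large'' is exactly what makes the step-size condition $\widehat{C}/\sqrt{T}\le 1/(2L)$ required by Theorem~\ref{thm:convergence} valid: squaring, this is $T \ge 4L^2\widehat{C}^2 = 4LBn\Delta/G^2$, which holds for all large enough $T$. I do not anticipate any real obstacle — the whole argument is the substitution above — and the only point needing a little care is the constant bookkeeping in the $O(\cdot)$'s, in particular bounding the $(1-k^2/d^2)/(k^2/d^2)$ factor by $d^2/k^2$ up to an absolute constant using $k\le d$, and treating $L$ as a fixed constant.
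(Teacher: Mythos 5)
Your substitution is correct: the chosen $\widehat{C}$ exactly balances the two summands in the $4/\sqrt{T}$ term, the $\widehat{C}^2L^2G^2 = Bn\Delta L$ computation and the bound $4\frac{1-k^2/d^2}{k^2/d^2}+1 \leq 4d^2/k^2$ are right, and the condition $T \geq 4LBn\Delta/G^2$ is indeed what ``$T$ sufficiently large'' means here. This is essentially the same (and the only natural) argument the paper relies on---the paper itself simply reproduces the statement from Corollary~2 of \cite{Qsparse-local-SGD} without spelling out the substitution, so your write-up just makes that step explicit.
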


Theorem~\ref{thm:convergence} provides non-asymptotic guarantees for Algorithm~\ref{algorithm_rtopk_memory}, where we observe that sparsification does not affect the first order term. Here, we are required to decide the horizon $T$ before running the algorithm. Therefore, in order for the algorithm to converge to a fixed point, the learning rate needs to follow a piecewise schedule (i.e. the learning rate would have to be reduced once in a while throughout the training process), which is  what we do in our experiments in the next section. By similarly building on results of \cite{Qsparse-local-SGD} (Theorem~2), one can bound the convergence rate of Algorithm~\ref{algorithm_rtopk_memory} with a decaying learning rate as well as when nodes perform local iterations.

\section{Experiments}\label{sec:experiments}
\subsection{Experiment Settings}
We validate our approach over a wide range of experiments including both image and language domains with CIFAR-10, ImageNet, and Penn Treebank (PTB) datasets and using two training methods
where each node communicates the gradient vector after local training: (1) on one batch of the local data (distributed setting) and (2) for one epoch over the local data (i.e. one iteration over the local data adapting the federated setting in \cite{federated0, federated1}). Note that in the federated setting the number of communication rounds (per epoch) is much smaller than that in the distributed setting. In the federated setting one epoch always  corresponds to one communication round while in the distributed setting we perform multiple communications in each epoch (78 communication rounds/epoch for CIFAR-10 and 265 for PTB). In our CIFAR-10 and ImageNet experiments, the dataset is distributed to each user in an i.i.d. fashion, which corresponds to a homogeneous data distributions among the nodes. On the other hand, the experiment on PTB dataset assigns one chapter of the corpus to each user, resulting in a more heterogeneous scenario for the data distributions among the nodes. We have 5 nodes in our experiments. 

In Algorithm~\ref{algorithm_rtopk_memory}, we combine rTop-$k$ with distributed SGD and error compensation.  Here, we add the error, i.e. the gradients that are not communicated in the current iteration, to the gradients computed in the next iteration so that all important gradients are communicated eventually as in \cite{deepgrad}. 
We fix the hyperparameter $r$ such that ${k}/{r} = {1}/{n}$ in all of our experiments. This choice is motivated by the following intuition: if the gradients corresponding to a given parameter are among the top $k$ largest magnitude gradients at all $n$ nodes,  this choice ensures that in expectation this parameter will be updated by one node, since each node chooses to update this parameter with probability ${k}/{r} = {1}/{n}$. As an example, a compression ratio of $99.9\%$ requires only $k = 0.1\%$ of the entries of each gradient vector to be communicated by the corresponding node. For $n=5$ nodes this is achieved by taking $\frac{r}{d}=0.5\%$ of the entries of each gradient vector ($d$ is the number of parameters in the model), which have the largest magnitudes, and then communicating a random subset of $\frac{k}{r}=\frac{1}{n}=20\%$ of them. The central node calculates the global update vector by averaging the updates it receives for each component. 

We compare our results with the baseline setting where there is no compression, the setting proposed in \cite{deepgrad}, which uses the gradient top-$k$ selection method, and the random-$k$ sparsification strategy where the gradients to be communicated by each node are chosen uniformly at random \cite{federated2}. We employ the warm-up strategy and exponentially increase the sparsity in the warm-up period as in \cite{deepgrad}. We further employ the local gradient accumulation strategy in \cite{deepgrad}, which provides substantial improvement in the performance for all sparsification methods. We do not compare our strategy to stochastic quantization  techniques such as \cite{QSGD} as our main goal is to compare the performance of different sparsification methods. Also, \cite{Stichetal} observes that top-$k$ selection outperforms stochastic quantization  in experiments and theoretically it can reduce the communication cost to at most order $\sqrt{d}$ bits, while sparsification methods can achieve order $k\log d$, where $d$ is the number of parameters. %

\subsection{Image Domain}
For image classification tasks, we trained ResNet-18 \cite{resnet} on CIFAR-10 \cite{cifar_dataset} and ResNet-34 \cite{resnet} on ImageNet \cite{imagenet_cvpr09}. CIFAR-10 dataset consists of 50,000 training images, 5000 for each class, and 10000 test images, 1000 for each class. ImageNet dataset contains over 1 million training images and 50,000 validation images in 1000 classes. In all experiments, we use momentum SGD as the optimizer and cross entropy as the loss function. We split the data into batches of size 128 for CIFAR-10 and 32 for ImageNet experiments. We set the warm-up period as 5 epochs.

In our first two experiments, we train ResNet-18 on CIFAR-10 in both distributed and federated settings and show the results in Figure~\ref{fig:CIFAR10_distributed} and Figure~\ref{fig:CIFAR10_federated} respectively. Figure~\ref{fig:ImageNet} shows the results of training ResNet-34 on ImageNet dataset. Figures depict the performance of baseline with no compression and our rTop-$k$ strategy with compression ratios $99\%$ and $99.9\%$. To compare our method with \cite{deepgrad}, we present the performance of top-$k$ strategy with compression ratios $99\%$ and $99.9\%$. Final accuracies and compression ratios are summarized in Tables~\ref{tab:cifar10_dist}, \ref{tab:cifar10_fed}, and \ref{tab:ImageNet}. 

We note that in all experiments rTop-$k$ strategy has substantially better performance compared to top-$k$ and random-$k$ strategies under the same compression ratio. In the federated settings in Table~\ref{tab:cifar10_fed} and \ref{tab:ImageNet}, we observe that the accuracy of rTop-$k$ with $99.9\%$ compression ratio is better than the performance of top-$k$ $99\%$ compression ratio, corresponding to more than an order of magnitude improvement in compression ratio. Another interesting observation is that under $99\%$ compression ratio rTop-$k$ outperforms the baseline.

     \begin{figure*}[!h]
      \begin{minipage}{0.5\textwidth}
       \centering
        \includegraphics[width=\textwidth]{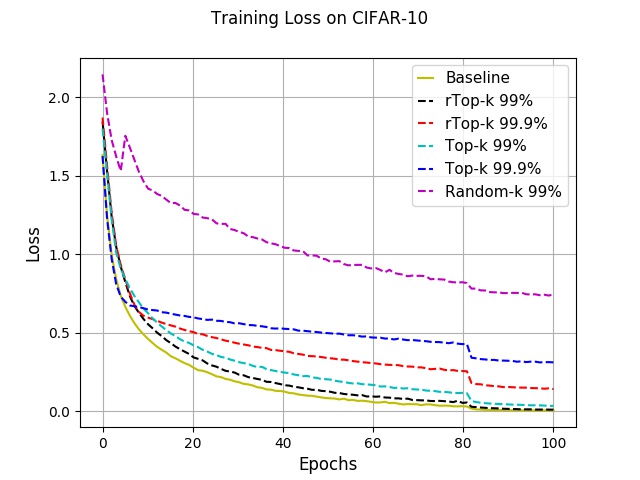}
      \end{minipage}\hfill
      \begin{minipage}{0.5\textwidth}
        \centering
        \includegraphics[width=\textwidth]{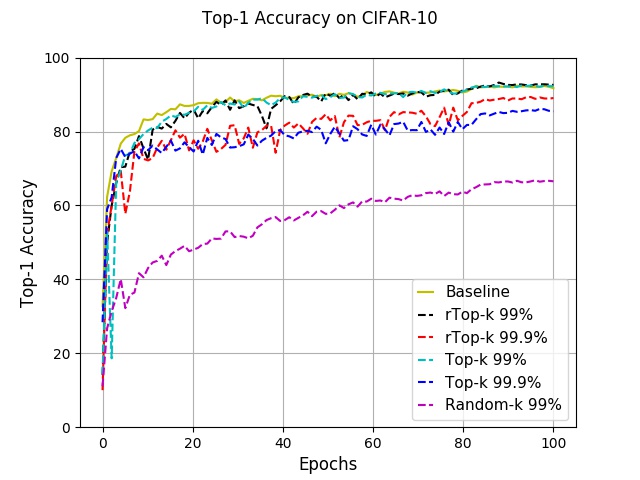}
      \end{minipage}\hfill
      \centering
      \caption{Training Loss and Top-1 Test Accuracy on CIFAR-10 (distributed setting).}\label{fig:CIFAR10_distributed}
   \end{figure*}

   \begin{figure*}[!h]
      \begin{minipage}{0.5\textwidth}
        \centering
        \includegraphics[width=\textwidth]{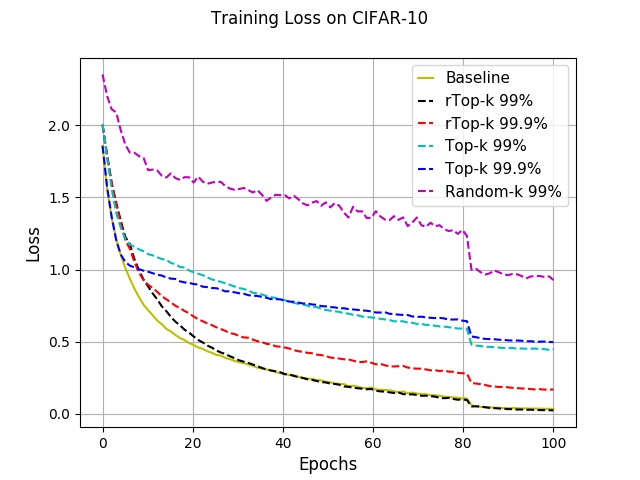}
      \end{minipage}\hfill
      \begin{minipage}{0.5\textwidth}
        \centering
        \includegraphics[width=\textwidth]{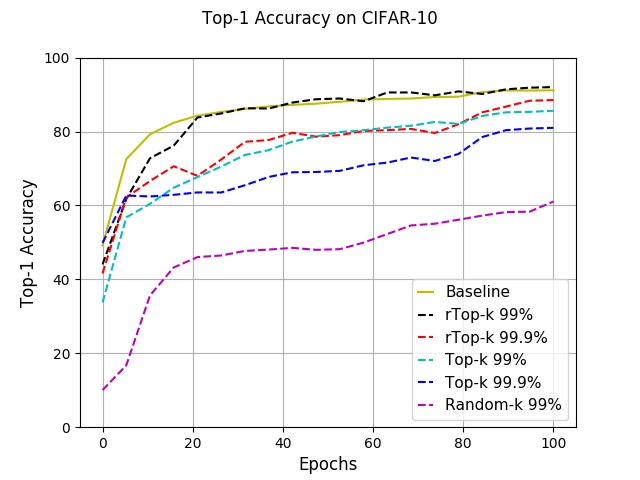}
      \end{minipage}\hfill
      \centering
      \caption{Training Loss and Top-1 Test Accuracy on CIFAR-10 (federated setting).}\label{fig:CIFAR10_federated}
   \end{figure*}
 
    \begin{figure*}[!h]
      \begin{minipage}{0.5\textwidth}
        \centering
        \includegraphics[width=\textwidth]{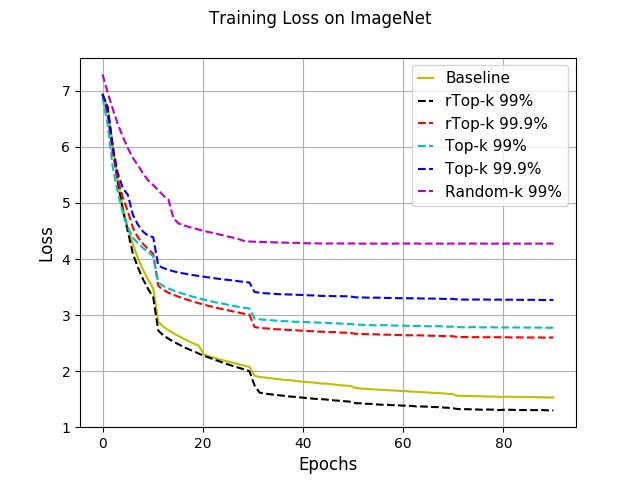}
      \end{minipage}\hfill
      \begin{minipage}{0.5\textwidth}
        \centering
        \includegraphics[width=\textwidth]{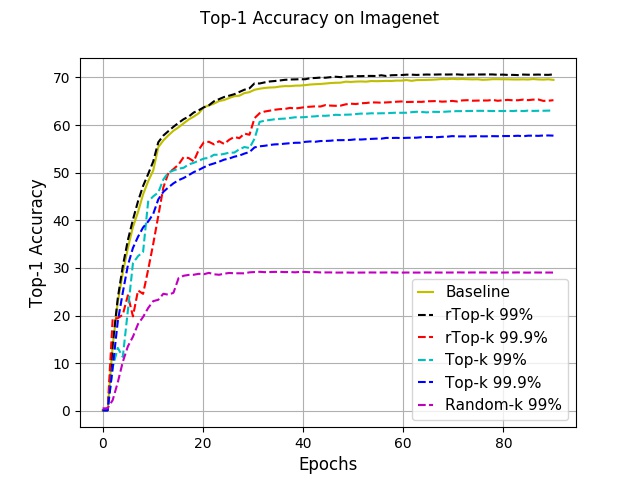}
      \end{minipage}\hfill
      \centering
      \caption{Training Loss and Top-1 Test Accuracy on ImageNet (federated setting).}\label{fig:ImageNet}
   \end{figure*}

\begin{table}[!h]
\centering
\caption{ResNet-18 trained on CIFAR-10 (distributed setting).}   
\label{tab:cifar10_dist}
\arrayrulecolor{black}
\begin{tabular}{!{\color{black}\vrule}l!{\color{black}\vrule}l!{\color{black}\vrule}l!{\color{black}\vrule}l!{\color{black}\vrule}l!{\color{black}\vrule}} 
\hline
Method                   & Top-1 Accuracy  & Compression  \\ 
\hline
Baseline                             & 92.40\%                           & -            \\ 
\hline
rTop-$k$                           & \textbf{93.25\%}                          & 99\%         \\ 
\hline
rTop-$k$                        & 89.34\%                          & 99.9\%         \\ 
\hline
Top-$k$              & \textbf{92.46\%}                          & 99\%      \\ 
\hline
Top-$k$             & 86.12\%                             & 99.9\%      \\
\hline
Random-k             & 66.81\%                             & 99\%      \\
\hline
\end{tabular}
\arrayrulecolor{black}
\end{table}

\begin{table}[!h]
\centering
\caption{ResNet-18 trained on CIFAR-10 (federated setting).}
\label{tab:cifar10_fed}
\arrayrulecolor{black}
\begin{tabular}{!{\color{black}\vrule}l!{\color{black}\vrule}l!{\color{black}\vrule}l!{\color{black}\vrule}l!{\color{black}\vrule}l!{\color{black}\vrule}} 
\hline
Method                   & Top-1 Accuracy & Compression  \\ 
\hline
Baseline                             & 91.16\%                           & -            \\ 
\hline
rTop-$k$                           & \textbf{92.02\%}                          & 99\%         \\ 
\hline
rTop-$k$                         & 88.51\%                          & 99.9\%         \\ 
\hline
Top-$k$              & 85.62\%                           & 99\%      \\ 
\hline
Top-$k$            & 81.00\%                             & 99.9\%      \\
\hline
Random-k             & 61.07\%                             & 99\%      \\
\hline
\end{tabular}
\arrayrulecolor{black}
\end{table}

\begin{table}[!h]
\centering
\caption{ResNet-34 trained on ImageNet (federated setting).}   
\label{tab:ImageNet}
\arrayrulecolor{black}
\begin{tabular}{!{\color{black}\vrule}l!{\color{black}\vrule}l!{\color{black}\vrule}l!{\color{black}\vrule}l!{\color{black}\vrule}l!{\color{black}\vrule}} 
\hline
Method                   & Top-1 Accuracy  & Compression  \\ 
\hline
Baseline                             & 69.70\%                           & -            \\ 
\hline
rTop-$k$                          & \textbf{70.63\%}                          & 99\%         \\ 
\hline
rTop-$k$                         & 65.37\%                          & 99.9\%         \\ 
\hline
Top-$k$              & 63.06\%                          & 99\%      \\ 
\hline
Top-$k$             & 57.80\%                             & 99.9\%      \\
\hline
Random-$k$             & 29.19\%                             & 99\%      \\
\hline
\end{tabular}
\arrayrulecolor{black}
\end{table}
\subsection{Language Domain}
We use the Penn Treebank corpus (PTB) dataset, which consists of 923,000 training,
73,000 validation and 82,000 test words \cite{penn_treebank}. We train the 2-layer LSTM language model architecture with 1500 hidden units per layer \cite{press-wolf-2017} and tie the input and the output embeddings \cite{Inan2016}. We use the same train/validation/test set split and vocabulary as \cite{Mikolov10}. We use vanilla SGD with gradient clipping. We use the same hyperparameters (i.e. weight initialization, learning rate schedule, batch size) as in \cite{Zaremba14}. We set the warm-up period to 5 epochs.

In our first experiment, we study the distributed setting where each node communicates the gradient vector after the forward-backward pass on a single batch of local data. Figure \ref{fig:language_distributed} shows the perplexity and training loss of the trained language model in this setting. 

\begin{figure*}[!h]
\begin{minipage}{0.5\textwidth}
\centering
\includegraphics[width=\textwidth]{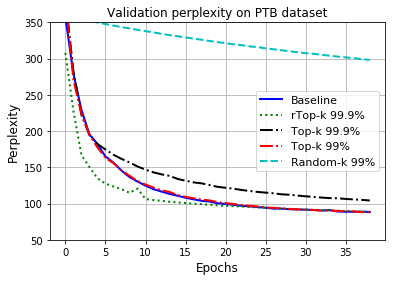}
\end{minipage}\hfill
\begin{minipage}{0.5\textwidth}
\centering
\includegraphics[width=\textwidth]{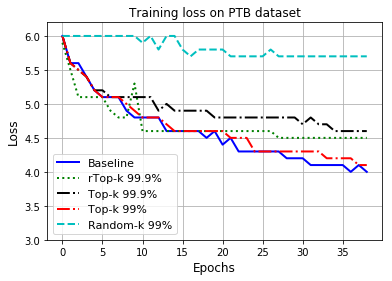}
\end{minipage}\hfill
\centering
\caption{Perplexity and training loss of LSTM language model on PTB dataset (distributed setting).}
\label{fig:language_distributed}
\end{figure*}

\begin{table}[!h]
\centering
\caption{Training results of language modeling on PTB dataset (distributed setting).}
\label{tab:language_distributed}
\arrayrulecolor{black}
\begin{tabular}{!{\color{black}\vrule}l!{\color{black}\vrule}l!{\color{black}\vrule}l!{\color{black}\vrule}l!{\color{black}\vrule}} 
\hline
Method                   &  Perplexity & Compression  \\ 
\hline
Baseline                   & 84.63         & -            \\ 
\hline
rTop-$k$  & \textbf{82.49}         & 99.9\%      \\ 
\hline
Top-$k$ &  {91.84}         & 99.9\%      \\
\hline
Top-$k$ &  \textbf{84.31}      & 99\%      \\
\hline
Random-$k$ &  281.61     & 99\%      \\
\hline
\end{tabular}
\arrayrulecolor{black}
\end{table}

In the second experiment, we study the federated setting where each node communicates the gradient vector after local training for one epoch over the local data. Figure \ref{fig:language_federated} shows the perplexity and training loss of the trained language model in this setting. We did not use the local gradient accumulation strategy in this setting since we did not observe an improvement in the performance. 
 
 \begin{figure*}[!h]
\begin{minipage}{0.5\textwidth}
\centering
\includegraphics[width=\textwidth]{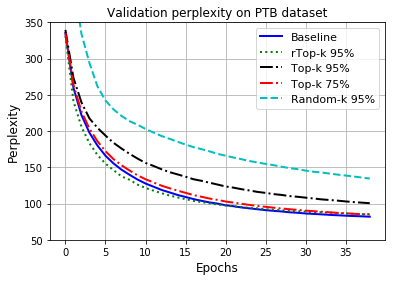}
\end{minipage}\hfill
\begin{minipage}{0.5\textwidth}
\centering
\includegraphics[width=\textwidth]{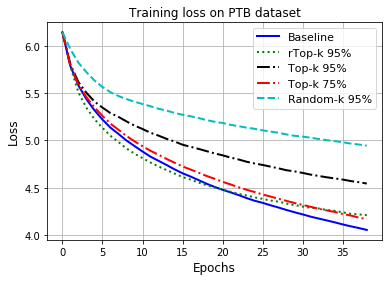}
\end{minipage}\hfill
\centering
\caption{Perplexity and training loss of LSTM language model on PTB dataset (federated setting).}
\label{fig:language_federated}
\end{figure*}

\begin{table}[!h]
\centering
\caption{Training results of language modeling on PTB dataset (federated setting).}
\label{tab:language_federated}
\arrayrulecolor{black}
\begin{tabular}{!{\color{black}\vrule}l!{\color{black}\vrule}l!{\color{black}\vrule}l!{\color{black}\vrule}l!{\color{black}\vrule}} 
\hline
Method                   &  Perplexity & Compression  \\ 
\hline
Baseline                   & 82.14         & -            \\ 
\hline
rTop-$k$ & \textbf{82.02}         & 95\%      \\ 
\hline
Top-$k$ &  {97.05}         & 95\%      \\
\hline
Top-$k$ &  \textbf{81.97}      & 75\%      \\
\hline
Random-$k$ &  130.91     & 95\%      \\
\hline
\end{tabular}
\arrayrulecolor{black}
\end{table}

In our first experiment, we observe from Figure \ref{fig:language_distributed} that the validation perplexity of our rTop-$k$ strategy matches the baseline with 99.9\% compression ratio. The top-$k$ strategy achieves the same performance with 99\% compression ratio. We also note that the random-$k$ strategy has substantially worse performance in this experiment. A similar set of results is obtained in the second experiment with slightly less aggressive compression levels compared to the first experiment as shown in Figure \ref{fig:language_federated}. 
In all cases, the rTop-$k$ strategy has substantially better performance compared with the other methods at the same compression ratio. In both experiments, we observe from training losses and the corresponding test results (Figure \ref{fig:language_distributed} and Table \ref{tab:language_distributed} for the first experiment and Figure \ref{fig:language_federated} and Table \ref{tab:language_federated} for the second experiment) that the rTop-$k$ strategy obtains a better perplexity compared to baseline while its training loss is slightly larger. This suggests that training with rTop-$k$ strategy results in better generalization.

\section{Proof of Theorem~\ref{thm:sparseBernoulliUB}}\label{sec:SparseBUB}

To prove the desired upper bound, we describe an explicit encoding scheme $\Pi_i:\{0,1\}^d \to \{0,1\}^k$ and estimator $\hat\theta$ that is a function of the $M_i = \Pi_i(X_i)$ that achieves at most this error. The encoding functions $\Pi_i$ are defined as follows:
\begin{itemize}
\item[(i)]The first $\log d$ bits of $M_i = \Pi_i(X_i)$ encode the number of nonzero entries in $X_i$, i.e., $\|X_i\|_1$.
\item[(ii)] We form a codebook for the remaining $k-\log d$ bits such that each element of $\{0,1\}^d$ with at most $k'$ ones maps to a unique $k-\log d$ bit string. Note that we can set $k' \geq \frac{k-\log d}{\log d}$.
\item[(iii)] Take $X_i$, and if $\|X_i\|_1 > k'$, we form $\tilde X_i$ by uniformly at random keeping only $k'$ out of the original $\|X_i\|_1$ ones. If $\|X_i\|_1 \leq k'$, then $\tilde X_i = X_i$. We encode $\tilde X_i$ using the codebook from step (ii).
\end{itemize}
With this encoding scheme, the estimator $\hat\theta$ has access to both $\tilde X_i$ and $\|X_i\|_1$ for $i=1,\ldots,n$. For convenience, define the subsampling fraction $S_i$ by
\begin{align*} S_i = \begin{cases} \frac{k'}{\|X_i\|_1} & \; \text{if} \; \|X_i\|_1 > k' \\ 1 & \; \text{otherwise .} \end{cases} \end{align*}
Our estimator $\hat\theta$ is now defined by $\hat\theta = \frac{1}{n}\sum_{i=1}^n \frac{\tilde X_i}{S_i} \; .$
This estimator is unbiased in that
\begin{align*}
\mathbb{E} [\hat\theta] & = \frac{1}{n}\sum_{i=1}^n \mathbb{E}\left[\frac{\tilde X_i}{S_i}\right] \\ & = \frac{1}{n}\sum_{i=1}^n \mathbb{E}\left[\mathbb{E}\left[\frac{\tilde X_i}{S_i}\bigg| S_i \right]\right] \\
& = \frac{1}{n}\sum_{i=1}^n \mathbb{E}\left[\mathbb{E}\left[X_i | S_i \right]\right] \\ & = \frac{1}{n}\sum_{i=1}^n \mathbb{E}\left[ X_i \right] \\ & = \theta \; .
\end{align*}
Finally, we compute the error:
\begin{align*}
\mathbb{E}[(\hat\theta_j - \theta_j)^2]  = & \mathbb{E}[\hat\theta_j^2] - \theta_j^2 \\
 = & \frac{1}{n^2} \sum_{i=1}^n \mathbb{E}\left[\frac{\tilde X_{i,j}^2}{S_i^2}\right]  \\
& + \frac{1}{n^2}\sum_{i\neq k} \mathbb{E}\left[\frac{\tilde X_{i,j}\tilde X_{k,j}}{S_iS_k}\right]  -\theta_j^2 \\
 = & \frac{1}{n^2} \sum_{i=1}^n \mathbb{E}\left[\mathbb{E}\left[\frac{\tilde X_{i,j}}{S_i^2}\bigg| S_i \right]\right]  \\
& + \frac{1}{n^2}\sum_{i\neq k}\mathbb{E}\left[\frac{\tilde X_{i,j}}{S_i}\right]\mathbb{E}\left[\frac{\tilde X_{k,j}}{S_k}\right] -\theta_j^2 \\
 = & \frac{1}{n^2} \sum_{i=1}^n \mathbb{E}\left[\mathbb{E}\left[\frac{ X_{i,j}}{S_i}\bigg| S_i \right]\right] \\
& + \frac{n(n-1)}{n^2}\theta_j^2 -\theta_j^2 \\
 \leq & \frac{1}{n^2} \sum_{i=1}^n \mathbb{E}\left[\mathbb{E}\left[\frac{ X_{i,j}}{S_i}\bigg| S_i \right]\right] \; ,
\end{align*}
and then summing over each component $j$,
\begin{align}
\mathbb{E}\|\hat\theta-\theta\|_2^2 & \leq \frac{1}{n^2} \sum_{i=1}^n \mathbb{E}\left[\mathbb{E}\left[\frac{ \|X_i\|_1}{S_i}\bigg| S_i \right]\right] \nonumber\\
& \leq \frac{1}{n^2} \sum_{i=1}^n \left(\mathbb{E}\left[\frac{\|X_i\|_1^2}{k'}\right]+k'\right) \label{eq:achiev1}\\
& \leq \frac{1}{n^2} \sum_{i=1}^n \left(\mathbb{E}\left[\frac{\|X_i\|_1^2}{k'}\right]+C_1s\right) \label{eq:achiev2}\\
& \leq C_2\frac{s^2\log d}{nk} \label{eq:achiev3} \; .
\end{align}
In the displays above, \eqref{eq:achiev1} follows by separating out the cases $S_i =1$ and $S_i < 1$. In any case where $S_i<1$, the ratio inside the conditional expectation is exactly $k'$, and in any case where $S_i = 1$, the fraction $\frac{\|X_i\|_1 }{S_i} =\|X_i\|_1 \leq k'$. The step in \eqref{eq:achiev2} follows because $k' \leq C_1\frac{k}{\log d}$ and we are assuming $\frac{k}{\log d} \leq s$, and \eqref{eq:achiev3} uses the second moment formula for a Poisson binomial distribution.

\section{Proof of Theorem~\ref{thm:sparseBernoulliLB}} \label{sec:pf2}
To prove Theorem~\ref{thm:sparseBernoulliLB}, we use part of Theorem 3 from \cite{isit} (see also \cite{Leightonarxiv}), reproduced below, which is proved using a Fisher information argument. Recall that in the context of Fisher information, the score function vector is the gradient of the log-likelihood, i.e $S_\theta(x) = \nabla_\theta\log f(x|\theta)$. Recall also that the $\Psi_2$-Orlicz norm of a random variable $X$ is defined as
$$\|X\|_{\Psi_2} = \inf\{K \in (0,\infty) \; | \; \mathbb{E}[\Psi_2(|X|/K)]\leq1\}$$
where
$\Psi_2(x) = \exp(x^2) - 1 \; ,$
and that a random variable with finite $\Psi_2$-Orlicz norm is sub-Gaussian \cite{versh}.

\begin{thm}[Barnes, Han, \"Ozg\"ur 2019] \label{thm:lower_bound}
Suppose $\Theta = [-B,B]^d$. For any estimator $\hat{\theta}(M)$ and communication strategies $\Pi_i$, if $S_\theta(X)$ satisfies $\|\langle u,S_\theta(X)\rangle\|_{\Psi_2}\leq N$ for any unit vector $u\in\mathbb{R}^d$, then
\begin{align*}
\sup_{\theta\in\Theta} \mathbb{E} & [\|\hat\theta - \theta\|^2] \nonumber \geq \frac{d^2}{CN^2n + \frac{d\pi^2}{B^2}} \; .
\end{align*}
\end{thm}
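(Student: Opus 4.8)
The plan is to prove this by a Bayesian Cram\'er--Rao (van Trees) argument, with the communication constraint entering only through a geometric bound on the Fisher information that each message carries. First I would place a product prior $\lambda(\theta)=\prod_{j=1}^d\lambda_0(\theta_j)$ on the hypercube, each coordinate drawn from the smooth density $\lambda_0(t)=\frac1B\cos^2\!\big(\frac{\pi t}{2B}\big)$ on $[-B,B]$. This density is chosen because it vanishes at the endpoints $\pm B$, so the boundary terms in the integration-by-parts step of van Trees disappear, and because a direct computation gives its Fisher information $\mathcal I(\lambda_0)=\int_{-B}^{B}\frac{(\lambda_0'(t))^2}{\lambda_0(t)}\,dt=\frac{\pi^2}{B^2}$. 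By independence across coordinates, the prior Fisher information matrix is $\frac{\pi^2}{B^2}I_d$, whose trace is exactly the $\frac{d\pi^2}{B^2}$ appearing in the denominator of the claimed bound.

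Next I would invoke the multivariate van Trees inequality in its trace form: for any estimator $\hat\theta(M)$,
$$\int \mathbb E_\theta\|\hat\theta-\theta\|_2^2\,\lambda(\theta)\,d\theta \;\geq\; \frac{\big(\mathrm{tr}\,I_d\big)^2}{\mathbb E_\lambda[\mathrm{tr}\,J_M(\theta)] + \mathrm{tr}\,\mathcal I(\lambda)} \;=\; \frac{d^2}{\mathbb E_\lambda[\mathrm{tr}\,J_M(\theta)] + \frac{d\pi^2}{B^2}},$$
where $J_M(\theta)$ is the Fisher information matrix of the received transcript $M$ about $\theta$ and the $d^2$ numerator comes from $(\mathrm{tr}\,I_d)^2=d^2$. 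Since the Bayes risk lower bounds the worst-case risk, it remains only to establish $\mathbb E_\lambda[\mathrm{tr}\,J_M(\theta)]\leq CN^2 n$.

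The crux is this last bound, and it is where the geometric characterization of Fisher information from \cite{allerton, isit} does the work. Because the $n$ nodes encode independently, Fisher information is additive, $J_M(\theta)=\sum_{i=1}^n J_{M_i}(\theta)$, so it suffices to bound $\mathrm{tr}\,J_{M_i}(\theta)$ for a single message. Writing $p_{i,m}(\theta)=P_\theta(M_i=m)$ and using $\nabla_\theta p_{i,m}=p_{i,m}\,\mathbb E_\theta[S_\theta(X_i)\mid M_i=m]$, the message information takes the form $J_{M_i}(\theta)=\sum_m p_{i,m}\,\mathbb E_\theta[S_\theta\mid M_i=m]\,\mathbb E_\theta[S_\theta\mid M_i=m]^{\top}$, whence $\mathrm{tr}\,J_{M_i}(\theta)=\sum_m p_{i,m}\,\|\mathbb E_\theta[S_\theta\mid M_i=m]\|_2^2$. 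The key point is that a quantized (in the extreme, one-bit) message produces a low-rank information matrix concentrated along essentially a single direction, and the uniform sub-Gaussian control $\|\langle u,S_\theta(X)\rangle\|_{\Psi_2}\leq N$ over all unit vectors $u$ then bounds the squared length of each conditional-mean vector by $O(N^2)$; this is exactly what prevents the trace from scaling with the ambient dimension $d$. Summing over the $n$ independent messages yields $\mathbb E_\lambda[\mathrm{tr}\,J_M(\theta)]\leq CN^2 n$, and substituting the two denominator estimates into the van Trees bound gives $\sup_\theta\mathbb E_\theta\|\hat\theta-\theta\|_2^2\geq \frac{d^2}{CN^2n+\frac{d\pi^2}{B^2}}$.

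I expect the main obstacle to be precisely the per-message trace bound $\mathrm{tr}\,J_{M_i}\leq CN^2$: converting one-dimensional sub-Gaussian control of the score into an all-directions (trace) statement without paying a factor of $d$ relies on the rank/support limitation imposed by quantization together with a careful Orlicz-norm estimate of $\|\mathbb E_\theta[S_\theta\mid M_i=m]\|_2$. The delicate point is that for a genuinely $k$-bit message the information matrix has rank up to $2^k-1$, so the clean $O(N^2)$ trace bound is really driven by the rank-one structure of a binary message; handling the general case (and thereby the correct bit accounting) is the genuinely technical heart of the argument. By contrast, the $\cos^2$ prior construction and the van Trees step are essentially routine once the prior is in hand.
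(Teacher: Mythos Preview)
The paper does not prove this theorem; it is reproduced from \cite{isit} and invoked as a black box in the proof of Theorem~\ref{thm:sparseBernoulliLB}, with the only comment being that the result ``is proved using a Fisher information argument.'' There is therefore no in-paper proof against which to compare your proposal.

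Your outline is nonetheless the right one and matches the argument in \cite{isit, Leightonarxiv}: a van Trees (Bayesian Cram\'er--Rao) inequality with the product $\cos^2$ prior on $[-B,B]^d$, whose per-coordinate Fisher information $\pi^2/B^2$ yields the $\frac{d\pi^2}{B^2}$ denominator term, combined with a trace bound on the Fisher information each quantized message can carry, driven by the uniform $\Psi_2$ control of the score. You are also right that the substantive content is the per-message trace bound, and your formula $\mathrm{tr}\,J_{M_i}(\theta)=\sum_m p_{i,m}\,\|\mathbb E_\theta[S_\theta\mid M_i=m]\|_2^2$ is the correct starting point. One remark on the bit accounting you flag at the end: the statement as reproduced in this paper has $CN^2n$ in the denominator, yet when it is applied in the proof of Theorem~\ref{thm:sparseBernoulliLB} the corresponding term is of order $nk$; evidently the $k$-bit dependence is being absorbed into the constant $C$ here (or the statement is a slight abbreviation of the source). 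Your instinct that the honest per-message bound is $\mathrm{tr}\,J_{M_i}\leq CN^2k$, with the factor $k$ coming from the rank of a $k$-bit quantizer, is correct.
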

\begin{proof}[Proof of Theorem~\ref{thm:sparseBernoulliLB}]
The $\frac{s}{n}$ lower bound comes from considering the centralized case where there is no communication constraint. We will focus on the other bound, and will restrict our attention to a subset $\Theta'\subset\Theta$ and then use the fact that
$$\sup_{\theta\in\Theta} \mathbb{E}_\theta\|\hat\theta-\theta\|_2^2 \geq \sup_{\theta\in\Theta'} \mathbb{E}_\theta\|\hat\theta-\theta\|_2^2 \; .$$
In particular, let $\Theta' = \left[\frac{s}{2d},\frac{s}{d}\right]^d$. The distribution $f(x|\theta)$ is the product of Bernoulli distributions with parameters $\theta_i$, so the score function for each component $\theta_i$ is
$$S_{\theta_i}(x_i) = \frac{\partial}{\partial\theta_i}\log f(x_i|\theta_i) = \begin{cases} \frac{1}{\theta_i} & \; \text{, if } \; x_i=1 \\ \frac{-1}{1-\theta_i} & \; \text{, if } \; x_i =0 \; .\end{cases}$$
If we set
\begin{align*}
N = \max\left\{\frac{1}{\theta_i\sqrt{\log\frac{1}{\theta_i}}} \; , \; \frac{1}{(1-\theta_i)\sqrt{\log\frac{1}{(1-\theta_i)}}}  \right\} \; ,
\end{align*}
then in can be checked that
\begin{align*}
\mathbb{E}\left[e^{\left(\frac{S_{\theta_i}(X_i)}{N}\right)^2}\right] & = \theta_i e^{\left(\frac{1}{\theta_iN}\right)^2} + (1-\theta_i) e^{\left(\frac{1}{(1-\theta_i)N}\right)^2} \\
& \leq 2
\end{align*}
and thus $\|S_{\theta_i}(X_i)\|_{\Psi_2} \leq N$.
By taking sums of scaled independent sub-Gaussian random variables \cite{versh} we get
\begin{align*}
\|\langle u,S_\theta(X) \rangle\|_{\Psi_2}  \leq & c_1 \max\bigg\{\frac{1}{\theta_i\sqrt{\log\frac{1}{\theta_i}}} \; , \\
& \quad \frac{1}{(1-\theta_i)\sqrt{\log\frac{1}{(1-\theta_i)}}}  \bigg\} \\
& \leq c_1 \frac{1}{\theta_i\sqrt{\log\frac{1}{\theta_i}}} \leq \frac{c_2d}{s\sqrt{\log\frac{d}{s}}}
\end{align*}
where in the last line we have used the fact that $x^2\log x \geq (1-x)^2\log(1-x)$ for $0\leq x \leq \frac{1}{2}$ and $s\leq\frac{d}{2}$ so that $\frac{s}{d}\leq \frac{1}{2}$. Then by Theorem \ref{thm:lower_bound} above,
\begin{align*}
\sup_{\theta\in\Theta'} \mathbb{E}_\theta\|\hat\theta-\theta\|_2^2 & \geq \frac{d^2}{c_3nk\frac{d^2}{s^2\log\frac{d}{s}}+c_4\frac{d^3}{s^2}} \\
& \geq c_5\frac{s^2\log\frac{d}{s}}{nk}
\end{align*}
where the last inequality uses $nk\geq d\log\frac{d}{s}$.
\end{proof}


\bibliographystyle{IEEEtran}
\bibliography{di.bib}

%
%
%


\end{document}